\newtheorem{theorem}{Theorem}[section]
\newtheorem{proposition}{Proposition}[section]
\newcommand{\Graph}{\ensuremath{\set{G}}}
\newcommand{\Lap}{\ensuremath{\ma{L}}}
\newcommand{\Fou}{\ensuremath{\ma{U}}}
\newcommand{\Meas}{\ensuremath{\ma{M}}}
\newcommand{\sig}{\ensuremath{\vec{x}}}
\newcommand{\meas}{\ensuremath{\vec{y}}}
\newcommand{\nbVert}{\ensuremath{N}}
\newcommand{\nbClass}{\ensuremath{k}}
\newcommand{\Rbb}{\ensuremath{\mathbb{R}}} 
\newcommand{\Nbb}{\ensuremath{\mathbb{N}}}
\renewcommand{\leq}{\ensuremath{\leqslant}}
\renewcommand{\geq}{\ensuremath{\geqslant}}
\newcommand{\adjoint}{\ensuremath{{\intercal}}}
\newcommand{\norm}[1]{\ensuremath{\left\| #1\right\|}}
\newcommand{\ma}[1]{\ensuremath{\mathsf{#1}}}
\renewcommand{\vec}[1]{\ensuremath{\bm{#1}}}
\newcommand{\set}[1]{\ensuremath{\mathcal{#1}}}
\newcommand{\spann}{\ensuremath{{\rm span}}}
\newcommand{\ie}{\textit{i.e.}}
\DeclareMathOperator*{\argmin}{argmin}
\newtheorem{definition}[theorem]{Definition}
\begin{document}
%
\title{Graph sampling with determinantal processes}

\author{\IEEEauthorblockN{Nicolas Tremblay\IEEEauthorrefmark{1},
Pierre-Olivier Amblard\IEEEauthorrefmark{1} and 
Simon Barthelm\'e\IEEEauthorrefmark{1}}
\IEEEauthorblockA{\IEEEauthorrefmark{1}CNRS, GIPSA-lab, Grenoble-Alpes University, Grenoble, France}}

\maketitle

\begin{abstract}
We present a new random sampling strategy for $k$-bandlimited signals defined on graphs, based on determinantal point processes (DPP).  For small graphs, \ie, in cases where the spectrum of the graph is accessible, we exhibit a DPP sampling scheme that enables perfect recovery of bandlimited signals. For large graphs, \ie, in cases where the graph's spectrum is not accessible, we investigate, both theoretically and empirically, a sub-optimal but much faster DPP based on loop-erased random walks on the graph. Preliminary experiments show promising results especially in cases where the number of measurements should stay as small as possible and for graphs that have a strong community structure. Our sampling scheme is efficient and can be applied to graphs with up to $10^6$ nodes. 
\end{abstract}



\section{Introduction}
Graphs are a central modelling tool for network-structured data. 
Data on a graph, called graph signals~\cite{shuman_emerging_2013}, such as individual hobbies in social networks, blood flow of brain regions in neuronal networks, or traffic at a station in transportation networks, 
may be represented by a scalar per node. Studying such signals with respect to the particular graph topology on which it is defined, is the 
goal of graph signal processing (GSP). One of the top challenges of GSP is not only to adapt classical processing tools to arbitrary topologies defined  
by the underlying graph, but to do so with efficient algorithms that can handle the large datasets encountered today. 
A possible answer to this efficiency challenge is dimension reduction, and in particular sampling. 
Graph sampling~\cite{pesenson_sampling_2008, anis_efficient_2016, chen_discrete_2015, puy_random_2016, gama_rethinking_2016-1} consists in measuring a graph signal on a reduced set of nodes carefully chosen to optimize a predefined objective, may it be the estimation of some statistics, signal compression via graph filterbanks, signal recovery, etc.

In an effort to generalize Shannon's sampling theorem of bandlimited signals, several authors~\cite{anis_efficient_2016, chen_discrete_2015, puy_random_2016} have proposed sampling schemes adapted to the recovery of \textit{a priori} smooth graph signals, and in particular 
 $k$-bandlimited signals, \ie, linear combinations of only $k$ graph Fourier modes. Classically, there are three main types of sampling schemes that one may try to adapt to graphs: periodic, irregular, and random. 
 
 Periodic graph sampling is ill-defined unless the graph is not bipartite (or multipartite) and has only been investigated in the design of graph filterbanks where the objective is to sample ``one every two nodes''~\cite{shuman_multiscale_2016, narang_compact_2013}. 

Irregular graph sampling of $k$-bandlimited graph signals has been studied by Anis \emph{et al.}~\cite{anis_efficient_2016} and Chen \emph{et al.}~\cite{chen_discrete_2015} who, building upon  Pensenson's work~\cite{pesenson_sampling_2008}, look for optimal sampling sets of $k$-bandlimited graphs, in the sense that they are tight (\ie, they contain only $k$ nodes) and that they maximize noise-robustness of the reconstruction. To exhibit such an optimal set, one needs i) to compute the first $k$ eigenvectors of the Laplacian which becomes prohibitive at large scale, and ii) to optimize over all subsets of $k$ nodes, which is a large combinatorial problem. Using greedy algorithms and spectral proxies~\cite{anis_efficient_2016} to sidestep some of the computational bottlenecks, these methods enable to efficiently perform optimal sampling on graphs of size up to a few thousands nodes.


Loosening the tightness constraint on sampling sets, random graph sampling~\cite{puy_random_2016} shows that if $O(k\log{k})$ nodes are drawn with replacement according to a particular probability distribution $p^*$ that depends on the first $k$ eigenvectors of the Laplacian, then recovery is guaranteed with high probability. $p^*$  was shown to be closely related to \textit{leverage scores}, an important concept in randomized numerical linear algebra~\cite{drineas_randnla:_2016}. Moreover, an efficient estimation of $p^*$, bypassing expensive spectral computations, enables to sample very large graphs; as was shown in~\cite{tremblay_compressive_2016}, where experiments were performed on graphs with $N\simeq 10^6$ and $k\simeq 200$. 

In this paper, we detail another form of random graph sampling,  motivated by a simple experimental context. Consider a graph composed of two almost disconnected communities of equal size. Its first Fourier mode is constant (as always), and its second one is positive in one community, and negative in the other, while its absolute value is more or less constant on the whole graph. Therefore, a 2-bandlimited graph signal will approximately be equal to a constant in one community, and to another constant in the other: any sampling set composed of one node drawn from each community will enable to recover the whole signal. Unfortunately, random graph sampling as proposed in~\cite{puy_random_2016} samples independently with replacement: it has a fair chance of sampling only in one community, thereby missing half of the information. To avoid such situations, we propose to take advantage of determinantal point processes (DPP), random processes that introduce negative correlations to force diversity within the sampling set. 

\textbf{Contributions.} If the first $k$ eigenvectors of the Laplacian are computable, we exhibit a DPP that always samples sets of size $k$ perfectly embedding $k$-bandlimited graph signals. Moreover, in the case the graph is too large to compute its first $k$ eigenvectors, and building upon recent results linking determinantal processes and random rooted spanning forests~\cite{avena_random_2013}, we  show to what extent a simple algorithm based on loop-erased random walks  is an acceptable approximation. 

\section{Background}
\label{sec:background}
Consider an undirected weighted graph $\mathcal{G}$ of $\nbVert$ nodes represented by its adjacency matrix $\ma{W}$, where $\ma{W}_{ij}=\ma{W}_{ji}\geq0$ is the weight of the connection between nodes $i$ and $j$. Denote by $\Lap = \ma{D} - \ma{W}$ its Laplacian matrix, where $\ma{D}$ is the diagonal matrix with entries $\ma{D}_{ii}=\sum_{j} \ma{W}_{ij}$. $\Lap$ is real and symmetrical, therefore diagonalizable as $\Fou\ma{\Lambda}\Fou^\adjoint$ with $\Fou=(\vec{u}_1|\ldots|\vec{u}_N)\in\mathbb{R}^{N\times N}$ the graph Fourier basis and $\ma{\Lambda} = \text{diag}(\lambda_1,\ldots,\lambda_N)$ the diagonal matrix of the associated frequencies, that we sort: $0=\lambda_1\leq\lambda_2\leq\ldots\leq\lambda_N$. 
For any function $h:\mathbb{R}^+\rightarrow\mathbb{R}$, we write $h(\ma{\Lambda})=\text{diag}(h(\lambda_1),\ldots,h(\lambda_N))$. The graph filter associated to the frequency response $h$ reads $\Fou h(\ma{\Lambda})\Fou^\adjoint$.  

Given the frequency interpretation of the graph Fourier modes, one may define ``smooth'' signals as linear combinations of the first few low-frequency Fourier modes. Writing $\Fou_k=(\vec{u}_1|\ldots|\vec{u}_k)\in\mathbb{R}^{N\times k}$, we have the formal definition:
\begin{definition}[$k$-bandlimited signal on $\mathcal{G}$] A signal $\sig \in \Rbb^{\nbVert}$ defined on the nodes of the graph $\Graph$ is $k$-bandlimited with $\nbClass \in \Nbb^*$ if $\sig \in \spann(\ma{U}_\nbClass)$, \ie, $\exists~\vec{\alpha}\in\mathbb{R}^k$ such that $\sig = \Fou_k\vec{\alpha}$.
\end{definition}
Sampling consists in selecting a subset $\mathcal{A}=(\omega_1,\ldots,\omega_m)$ of $m$ nodes of the graph and measuring the signal on it. To each possible sampling set, we associate a measurement matrix $\Meas=(\vec{\delta}_{\omega_1}|\vec{\delta}_{\omega_2}|\ldots|\vec{\delta}_{\omega_m})^\adjoint\in\mathbb{R}^{m\times N}$ where $\vec{\delta}_{\omega_i}(j)=1$ if $j=\omega_i$, and 0 otherwise. Given a $k$-bandlimited signal $\sig$, its (possibly noisy) measurement on $\mathcal{A}$  reads:
\begin{align}
\label{eq:system_rec}
 \meas = \Meas\sig + \vec{n} \in\mathbb{R}^m,
\end{align}
where $\vec{n}$ models any kind of measurement noise and/or the fact that $\sig$ may not exactly be in $\spann(\Fou_k)$. In this latter case, $\sig = \Fou_k\Fou_k^\adjoint\sig + \vec{e}$, where $\Fou_k\Fou_k^\adjoint\sig$ is the closest signal to $\sig$ that is in $\spann(\Fou_k)$; and $\Meas\sig = \Meas\Fou_k\Fou_k^\adjoint\sig+\vec{n}$ with  $\vec{n}=\Meas\vec{e}$.  

\subsection{If $\Fou_k$ is known}
\label{subsec:greedy_options}
If $\Fou_k$ is known, the measurement on $\mathcal{A}$ reads $\meas=\Meas\Fou_k\vec{\alpha} + \vec{n}$ and one recovers the signal by solving:
\begin{align}
\label{eq:recovery_Uk_known}
 \sig_{\text{rec}} = \argmin_{\vec{z}\in\spann(\Fou_k)} \norm{\Meas\vec{z}-\meas}^2
= \Fou_k  (\Meas\Fou_k)^\dag \meas,
\end{align}
where $(\Meas\Fou_k)^\dag\in\mathbb{R}^{k\times m}$ is the Moore-Penrose pseudo-inverse of $\Meas\Fou_k\in\mathbb{R}^{m\times k}$ and $\norm{.}$ the Euclidean norm. Of course, perfect  reconstruction is impossible if $m<k$, as the system would be underdetermined. Hereafter, we thus suppose $m\geq k$. Denote by $\sigma_1\leq\ldots\leq\sigma_k$ the singular values of $\Meas\Fou_k$.

\begin{proposition}[Classical result]\label{thm:sigma_0} If $\sigma_1>0$, then perfect reconstruction (up to the noise level) is possible.\end{proposition}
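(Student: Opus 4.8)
The plan is to reduce the statement to the standard least-squares/pseudo-inverse calculus. First I would record the algebraic meaning of the hypothesis: $\sigma_1>0$ says that all $k$ singular values of $\Meas\Fou_k\in\Rbb^{m\times k}$ are strictly positive, \ie, $\Meas\Fou_k$ has full column rank $k$ (consistent with the standing assumption $m\geq k$). Consequently $(\Meas\Fou_k)^\adjoint\Meas\Fou_k$ is invertible and the Moore--Penrose pseudo-inverse takes the explicit form $(\Meas\Fou_k)^\dag=\big((\Meas\Fou_k)^\adjoint\Meas\Fou_k\big)^{-1}(\Meas\Fou_k)^\adjoint$; in particular it is a genuine left inverse, $(\Meas\Fou_k)^\dag\,\Meas\Fou_k=\ma{I}_k$.

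Second, I would plug the measurement model into the recovery formula \refeq{eq:recovery_Uk_known}. Writing the $k$-bandlimited signal as $\sig=\Fou_k\vec{\alpha}$ and $\meas=\Meas\Fou_k\vec{\alpha}+\vec{n}$, one gets
\begin{align}
\sig_{\text{rec}}&=\Fou_k(\Meas\Fou_k)^\dag\meas=\Fou_k\big[(\Meas\Fou_k)^\dag\Meas\Fou_k\big]\vec{\alpha}+\Fou_k(\Meas\Fou_k)^\dag\vec{n}\nonumber\\
&=\Fou_k\vec{\alpha}+\Fou_k(\Meas\Fou_k)^\dag\vec{n}=\sig+\Fou_k(\Meas\Fou_k)^\dag\vec{n}.
\end{align}
In the noiseless case $\vec{n}=\vec{0}$ this gives $\sig_{\text{rec}}=\sig$ exactly, which is the ``perfect reconstruction'' part.

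Third, for the noisy case I would quantify ``up to the noise level''. Since the columns of $\Fou_k$ are orthonormal, $\Fou_k$ acts as an isometry, so $\norm{\sig_{\text{rec}}-\sig}=\norm{(\Meas\Fou_k)^\dag\vec{n}}\leq\norm{(\Meas\Fou_k)^\dag}_2\,\norm{\vec{n}}$. From the singular value decomposition of $\Meas\Fou_k$ one reads off that the spectral norm of its pseudo-inverse is the reciprocal of its smallest singular value, $\norm{(\Meas\Fou_k)^\dag}_2=1/\sigma_1$. Hence $\norm{\sig_{\text{rec}}-\sig}\leq\norm{\vec{n}}/\sigma_1$, and the reconstruction error vanishes as the noise does.

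The argument is essentially routine linear algebra, so there is no genuine obstacle. The only two points deserving a line of justification are the equivalence ``$\sigma_1>0\iff\Meas\Fou_k$ has full column rank'' (which is what makes the pseudo-inverse a left inverse rather than merely the minimum-norm minimizer) and the identity $\norm{(\Meas\Fou_k)^\dag}_2=1/\sigma_1$, both immediate from the SVD. It is also worth stressing that the bound $1/\sigma_1$ is precisely what later motivates seeking sampling sets that make $\sigma_1$ as large as possible.
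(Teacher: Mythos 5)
Your argument is correct and follows essentially the same route as the paper: $\sigma_1>0$ makes $\Fou_k^\adjoint\Meas^\adjoint\Meas\Fou_k$ invertible, so the pseudo-inverse is a left inverse and $\sig_{\text{rec}}=\sig+\Fou_k(\Meas\Fou_k)^\dag\vec{n}$, which is exactly the paper's computation. Your closing bound $\norm{\sig_{\text{rec}}-\sig}\leq\norm{\vec{n}}/\sigma_1$ is a small quantitative addition the paper leaves implicit, and it nicely motivates the worst-case-error criterion \refeq{eq:optim_WCE}.
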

 \begin{proof}
If $\sigma_1>0$, then $\Fou_k^\adjoint\Meas^\adjoint\Meas\Fou_k$ is invertible; and since $\sig=\Fou_k\vec{\alpha}$, we have:
\begin{align*}
 \sig_{\text{rec}} &= \Fou_k  (\Meas\Fou_k)^\dag \meas = \Fou_k  (\Fou_k^\adjoint\Meas^\adjoint\Meas\Fou_k)^{-1} \Fou_k^\adjoint\Meas^\adjoint(\Meas \sig +\vec{n})\\
 &=\sig + \Fou_k (\Fou_k^\adjoint\Meas^\adjoint\Meas\Fou_k)^{-1} \Fou_k^\adjoint\Meas^\adjoint\vec{n}.\hspace{2.9cm}\qedhere
\end{align*}
\end{proof}

In the following, in the cases where $\Fou_k$ is known, we fix $m=k$. Chen et al.~\cite{chen_discrete_2015} showed that a sample of size $k$ always exists s.t. $\sigma_1>0$. In fact, in general, many possible such subsets exist. Finding the optimal one is a matter of definition. Authors in~\cite{chen_discrete_2015,anis_efficient_2016} propose two optimality definitions via noise robustness. A first option is to minimize the worst-case error, which translates to finding the subset that maximizes $\sigma_1^2$:
\begin{align}
\label{eq:optim_WCE}
\mathcal{A}^{\text{\footnotesize{WCE}}} = \arg\max_{\hspace{-0.7cm}\mathcal{A} \,\text{s.t.}\, |\mathcal{A}|=k}  \sigma_{1}^2.
\end{align}
A second option is to find the subset that minimizes the mean square error (assuming $\mathbb{E}(\vec{n}\vec{n}^\adjoint)$ is the identity) :
\begin{align}
\label{eq:optim_MSE}
 \hspace{-0.3cm}\mathcal{A}^{\text{\footnotesize{MSE}}} = \arg\min_{\hspace{-0.7cm}\mathcal{A} \,\text{s.t.}\, |\mathcal{A}|=k} \text{tr}\left[(\Fou_k^\adjoint\Meas^\adjoint\Meas\Fou_k)^{-1}\right]=\arg\min_{\hspace{-0.7cm}\mathcal{A} \,\text{s.t.}\, |\mathcal{A}|=k}\sum_{i=1}^k 
 \frac{1}{\sigma_i^2}, 
\end{align}
where $\text{tr}$ is the trace operator.

We argue that another possible definition of optimal subset is such that the determinant of $\Fou_k^\adjoint\Meas^\adjoint\Meas\Fou_k$ is maximal, \ie:
\begin{align}
\label{eq:optim_MV}
\displaystyle \mathcal{A}^{\text{\footnotesize{MV}}} = \arg\max_{\hspace{-0.7cm}\mathcal{A} \,\text{s.t.}\, |\mathcal{A}|=k}  \prod_{i=1}^k \sigma_i^2,
\end{align}
where MV stands for Maximum Volume, as the determinant is proportional to the volume spanned by the $k$ sampled lines of $\Fou_k$. This definition of optimality does not have a straight-forward noise robustness interpretation, even though it is related to the previous definitions (in particular, increasing $\sigma_1$ necessarily increases the determinant). Nevertheless, such volume sampling as it is sometimes called~\cite{deshpande_efficient_2010} has been proved optimal in the related problem of low-rank approximation~\cite{goreinov_maximal-volume_2001}, and is largely studied in diverse contexts~\cite{kulesza_determinantal_2012}.

In all three cases, finding the optimal subset is a very large combinatorial problem (it implies a search over all combinations of $k$ nodes):  in practice, greedy algorithms such as Alg.~\ref{alg:greedy} are used to find approximate solutions. In the following, we refer to $\hat{\mathcal{A}}^{\text{WCE}}$, $\hat{\mathcal{A}}^{\text{MSE}}$ and $\hat{\mathcal{A}}^{\text{MV}}$ the (deterministic) solutions obtained by greedy optimization of the above 3 objectives. We refer to $\hat{\mathcal{A}}^{\text{maxvol}}$ the solution to the $\texttt{maxvol}$ algorithm proposed in~\cite{goreinov_how_2008}, which is another approximation of $\mathcal{A}^{\text{MV}}$.

\begin{algorithm}[tb]
 \caption{Greedy algorithm to approximate~(\ref{eq:optim_WCE}),~(\ref{eq:optim_MSE}), or~(\ref{eq:optim_MV})}
 \label{alg:sampling}
\begin{algorithmic}
\label{alg:greedy}
\STATE \textbf{Input:} $\Fou_k$, \texttt{objective} = WCE, MSE or MV.\\
$\mathcal{Y}\leftarrow \emptyset$\\
\textbf{while} $|\mathcal{Y}|\neq k$ \textbf{do}:\\
\hspace{0.5cm}\textbf{for} $n=1,2,\ldots,N$ \textbf{do}:\\
\hspace{1cm} $\bm{\cdot}$ $\mathcal{Y}_n \leftarrow \mathcal{Y}\cup\{n\}$\\
\hspace{1cm} $\bm{\cdot}$ Compute $\{\sigma_i\}_{i=1,\ldots,|\mathcal{Y}_n|}$ the singular values of the\\\hspace{1.5cm} restriction of $\Fou_k$ to the lines indexed by $\mathcal{Y}_n$.\\
\hspace{1cm} $\bm{\cdot}$ Store the gain on the \texttt{objective} of adding $n$\\
\hspace{0.5cm}\textbf{end for}\\
\hspace{0.5cm}Add to $\mathcal{Y}$ the node for which the gain is maximal.\\
\textbf{end while}\\
\textbf{Output:} $\mathcal{Y}$.
\end{algorithmic}
\end{algorithm}

\subsection{If $\Fou_k$ is unknown}
If $\Fou_k$ is unknown, one may use a proxy that penalizes high frequencies in the solution, and thus recover an approximation of the original signal by solving the regularized problem:
\begin{align}
\label{eq:recovery_Uk_unknown}
 \sig_{\text{rec}} = \argmin_{\vec{z}\in\mathbb{R}^N} \norm{\Meas\vec{z}-\meas}^2 + \gamma\vec{z}^\adjoint\Lap^r\vec{z},
\end{align}
where $\gamma>0$ is the regularization parameter and $r$ the power of the Laplacian that controls the strength of the high frequency penalization. By differentiating with respect to $\vec{z}$, one has:
\begin{align}
  (\Meas^\adjoint\Meas + \gamma\Lap^r)\sig_{\text{rec}} = \Meas^\adjoint\meas,
\end{align}
which may be solved by direct inversion if $N$ is not too large, or --thanks to the objective's convexity-- by iterative methods such as gradient descent. 

\subsection{Reweighting in the case of random sampling}
\label{subsec:random_sampling}
In random sampling, the sample $\mathcal{A}=(\omega_1,\ldots,\omega_m)$ is a random variable, and therefore so is the measurement $\meas=\Meas\sig$. Consider two signals $\sig_1$ and $\sig_2\neq\sig_1$, and their measurements $\meas_1$ and $\meas_2$. If $\meas_1=\meas_2$, perfect reconstruction is impossible. To prevent this, one possible solution is to reweight the measurement such that the expected norm of the reweighted measurement equals the signal's norm. 
%
%
For instance, in~\cite{puy_random_2016}, $m$ nodes are drawn independently with replacement. At each draw, the probability to sample node $i$ is $p^*_i=\norm{\Fou_k^\adjoint\vec{\delta}_i}^2/k$. One has 
$\mathbb{E}\left(\norm{\ma{P}^{-1/2}\Meas\sig}^2\right)=\norm{\sig}^2$ if
\begin{align}
\label{eq:P_uncorr}
\ma{P}=\text{diag}(mp^*_{\omega_1}|\ldots|mp^*_{\omega_m}). 
\end{align}
This guarantees that if $\norm{\sig_1-\sig_2}>0$, then, for a large enough $m$, the reweighted measures will necessarily be distinct in the measurement space, \ie, $\norm{\ma{P}^{-1/2}(\meas_1-\meas_2)}>0$, ensuring that $\sig_1$ and $\sig_2$ have a chance of being recovered. 

Recovery with known $\Fou_k$ (Eq.~\eqref{eq:recovery_Uk_known}) thus transforms into 
\begin{align}
\label{eq:recovery_Uk_known_random}
 \sig_{\text{rec}} = \argmin_{\vec{z}\in\spann(\Fou_k)} \norm{\ma{P}^{-1/2}\left(\Meas\vec{z}-\meas\right)}^2;
\end{align}
and recovery with unknown $\Fou_k$ (Eq.~(\ref{eq:recovery_Uk_unknown})) transforms into 
\begin{align}
\label{eq:recovery_Uk_unknown_random}
 \sig_{\text{rec}} = \argmin_{\vec{z}\in\mathbb{R}^N} \norm{\ma{P}^{-1/2}\left(\Meas\vec{z}-\meas\right)}^2 + \gamma\vec{z}^\adjoint\Lap^r\vec{z}.
\end{align}


\section{Determinantal processes}

Denote by $[N]$ the set of all subsets of $\{1,2,\ldots,N\}$. 

\begin{definition}[Determinantal Point Process] 
\label{def:DPP} Consider a point process, \ie, a process that randomly draws an element $\mathcal{A}\in[N]$. It is determinantal if, for every $\mathcal{S}\subseteq\mathcal{A}$, 
$$\mathbb{P}(\mathcal{S}\subseteq\mathcal{A}) = \text{det}(\ma{K}_{\mathcal{S}}),$$
where $\ma{K}\in\mathbb{R}^{N\times N}$, a semi-definite positive matrix s.t. $0\preceq\ma{K}\preceq 1$, is called the marginal kernel; and $\ma{K}_\mathcal{S}$ is the restriction of $\ma{K}$ to the rows and columns indexed by the elements of $\mathcal{S}$. 
\end{definition}
\subsection{Sampling from a DPP and signal recovery}
\begin{algorithm}[tb]
   \caption{Sampling a DPP with marginal kernel $\ma{K}$~\cite{kulesza_determinantal_2012}}
   \label{alg:sampling}
\begin{algorithmic}
\STATE \textbf{Input:} Eigendecomposition of $\ma{K}: \{\mu_i, \ma{V}=(\vec{v}_1|\ldots|\vec{v}_N)\}$.\\
$\mathcal{J}\leftarrow \emptyset$\\
\textbf{for} $n=1,2,\ldots,N$ \textbf{do}:\\
\hspace{0.5cm} $\mathcal{J}\leftarrow \mathcal{J}\cup\{n\}$ with probability $\mu_n$\\
\textbf{end for}\\
$V\leftarrow\{\vec{v}_n\}_{n\in \mathcal{J}}$\\
$Y\leftarrow \emptyset$\\
\textbf{while} $|U|>0$ \textbf{do}:\\
\hspace{0.5cm} $\bm{\cdot}$ Select $i$ from $\mathcal{Y}$ with $\textrm{Pr}(i) = \frac{1}{|V|}\sum_{\vec{v}\in V}(\vec{v}^\adjoint\vec{e}_i)^2$\\
\hspace{0.5cm} $\bm{\cdot}$ $\mathcal{Y}\leftarrow \mathcal{Y}\cup\{i\}$\\
\hspace{0.5cm} $\bm{\cdot}$ $V\leftarrow V_{\perp}$, an orthonormal basis for the subspace of $V$\\\hspace{1cm} $\perp$ to $\vec{e}_i$.\\
\textbf{end while}\\
\textbf{Output:} $\mathcal{Y}$.
\end{algorithmic}
\end{algorithm}

Given the constraints on $\ma{K}$, its eigendecomposition  $\ma{K}=\sum_{i=1}^N \mu_i \vec{v}_i\vec{v}_i^\adjoint$ always exists, and Alg.~\ref{alg:sampling} provides a sample from the associated DPP~\cite{kulesza_determinantal_2012}. Note that $|\mathcal{A}|$, the number of elements  in $\mathcal{A}$, is distributed as the sum of $N$ Bernoulli trials of probability $\mu_i$ (see the ``for loop'' of Alg.~\ref{alg:sampling}). In particular:
\begin{align}
\label{eq:mean_and_var}
\hspace{-0.29cm} \mathbb{E}(|\mathcal{A}|)=\text{tr}(\ma{K})=\sum_{i=1}^N\mu_i~\text{and}~\text{Var}(|\mathcal{A}|)=\sum_{i=1}^N \mu_i (1-\mu_i).
\end{align}
Consider $\mathcal{A}$, the random variable of a DPP with kernel $\ma{K}$; and $\Meas$ its associated measurement matrix. Denote by $\pi_i=\ma{K}_{ii}$ the marginal probability that $\{i\}\subseteq\mathcal{A}$ and write:
\begin{align}
\label{eq:P_DPP}
 \ma{P}=\text{diag}(\pi_{\omega_1}|\ldots|\pi_{\omega_m})
\end{align}
\begin{proposition} $\forall\sig\in\mathbb{R}^N~~\mathbb{E}_{\mathcal{A}}\left(\norm{\ma{P}^{-1/2}\Meas\sig}^2\right)=\norm{\sig}^2.$
\end{proposition}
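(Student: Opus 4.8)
The plan is to rewrite the reweighted measurement energy as a sum over the (random) sampled set and then apply the defining marginal property of a DPP. First I would observe that, writing $\sig=(\sig(1),\ldots,\sig(N))^\adjoint$, the vector $\Meas\sig\in\mathbb{R}^m$ has entries $\sig(\omega_i)$, so by the definition of $\ma{P}$ in~\eqref{eq:P_DPP} the vector $\ma{P}^{-1/2}\Meas\sig$ has entries $\sig(\omega_i)/\sqrt{\pi_{\omega_i}}$. Since the DPP sample $\mathcal{A}$ is a \emph{set} (no repetitions), this gives
\begin{align*}
\norm{\ma{P}^{-1/2}\Meas\sig}^2 &= \sum_{i=1}^m \frac{\sig(\omega_i)^2}{\pi_{\omega_i}} = \sum_{j\in\mathcal{A}} \frac{\sig(j)^2}{\pi_j}\\
&= \sum_{\substack{j=1\\ \pi_j>0}}^N \mathds{1}_{\{j\in\mathcal{A}\}}\,\frac{\sig(j)^2}{\pi_j},
\end{align*}
where the restriction to $\pi_j>0$ is harmless since a node with $\pi_j=\ma{K}_{jj}=0$ satisfies $\mathbb{P}(\{j\}\subseteq\mathcal{A})=\det(\ma{K}_{\{j\}})=0$ and is therefore never sampled (in particular $\ma{P}$ is almost surely invertible, so the left-hand side is well defined).

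Next I would take the expectation over $\mathcal{A}$, using linearity and the marginal property from Definition~\ref{def:DPP} applied with $\mathcal{S}=\{j\}$, namely $\mathbb{P}(j\in\mathcal{A})=\det(\ma{K}_{\{j\}})=\ma{K}_{jj}=\pi_j$. This yields
\begin{align*}
\mathbb{E}_{\mathcal{A}}\!\left(\norm{\ma{P}^{-1/2}\Meas\sig}^2\right) &= \sum_{\substack{j=1\\ \pi_j>0}}^N \mathbb{P}(j\in\mathcal{A})\,\frac{\sig(j)^2}{\pi_j}\\
&= \sum_{\substack{j=1\\ \pi_j>0}}^N \pi_j\,\frac{\sig(j)^2}{\pi_j} = \sum_{j=1}^N \sig(j)^2 = \norm{\sig}^2,
\end{align*}
which is the claim. (The last equality uses that $\sig(j)=0$ whenever $\pi_j=0$ is not needed; rather, those terms vanish because a never-sampled node contributes $0$ to $\norm{\ma{P}^{-1/2}\Meas\sig}^2$ — equivalently one simply never divides by $\pi_j=0$.)

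I do not expect a genuine obstacle here: the computation is essentially the unbiasedness identity $\mathbb{E}(\sum_{j\in\mathcal{A}} g(j)) = \sum_j \pi_j g(j)$ for the inclusion probabilities of a point process, specialized to $g(j)=\sig(j)^2/\pi_j$, and it is the exact analogue of the reweighting identity~\eqref{eq:P_uncorr} used for sampling with replacement in~\cite{puy_random_2016}. The only points requiring a little care are (i) that $\mathcal{A}$ has random cardinality, so the sum $\sum_{i=1}^m$ must be turned into the indicator sum $\sum_{j=1}^N \mathds{1}_{\{j\in\mathcal{A}\}}$ before taking expectations, and (ii) the treatment of nodes with zero marginal probability, both of which are handled above. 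Notably, unlike the inner products of distinct nodes, only the first-order marginals $\pi_j$ enter, so no property of $\ma{K}$ beyond its diagonal is used.
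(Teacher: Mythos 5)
Your proof is correct and essentially the paper's own argument: the paper computes $\mathbb{E}_{\mathcal{A}}(\sum_{i\in\mathcal{A}}x_i^2/\pi_i)$ by summing over realizations $\mathcal{A}$ weighted by $\mathbb{P}(\mathcal{A})$ and swapping the order of summation, which is exactly your indicator decomposition combined with $\mathbb{E}(\mathds{1}_{\{j\in\mathcal{A}\}})=\ma{K}_{jj}=\pi_j$ (your route even skips the paper's unnecessary aside on $L$-ensembles, which is not needed for this first-order identity). The only slip is your closing parenthetical: a node with $\pi_j=0$ indeed drops out of the expectation, but it does \emph{not} drop out of $\norm{\sig}^2$, so for arbitrary $\sig\in\mathbb{R}^N$ the stated equality actually requires $\pi_j>0$ for every $j$ (or $\sig$ supported on $\{j:\pi_j>0\}$) --- an assumption the paper's proof also makes implicitly when it divides by $\pi_i$, and which holds for the kernel $\ma{K}_q$ used here since $g_q>0$.
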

\begin{proof}
Let $\sig\in\mathbb{R}^N$. In the following, we need the probability $\mathbb{P}(\mathcal{A})$ that $\mathcal{A}$ is sampled (instead of marginal probabilities). One can show that $\mathbb{P}(\mathcal{A})$ is proportional to the determinant of the restriction to $\mathcal{A}$ of a matrix called $L$-ensemble~\cite{kulesza_determinantal_2012}. Note that: $\forall i~\sum_{\mathcal{A}\supset i}\mathbb{P}(\mathcal{A})=\mathbb{P}(\{i\}~\text{is sampled})=\pi_i$. 
 Thus:
\begin{align*}
 \mathbb{E}_{\mathcal{A}}&\left(\norm{\ma{P}^{-1/2}\Meas\sig}^2\right)= \mathbb{E}_{\mathcal{A}}\left(\sum_{i\in\mathcal{A}}\frac{x_i^2}{\pi_i}\right)
 =\sum_\mathcal{A}\mathbb{P}(\mathcal{A})\sum_{i\in\mathcal{A}}\frac{x_i^2}{\pi_i}\\
 &\hspace{2.3cm}=\sum_{i=1}^N x_i^2 \sum_{\mathcal{A}\supset i}\frac{\mathbb{P}(\mathcal{A})}{\pi_i}
 =\sum_{i=1}^N x_i^2 =\norm{\sig}^2\hspace{0.2cm}\qedhere
\end{align*}\end{proof}

This result is the first step to proove a restricted isometry property (RIP) for $\ma{P}^{-1/2}\Meas$ -- currently work in progress. In this paper's context, it shows how to reweight the measurement in the case of sampling from a DPP: recovery should be performed via optimization of \eqref{eq:recovery_Uk_known_random} or~\eqref{eq:recovery_Uk_unknown_random} with $\ma{P}$ as in~\eqref{eq:P_DPP}.

\subsection{If $\Fou_k$ is known: the ideal low-pass marginal kernel}
Consider the DPP defined by the following marginal kernel: 
\begin{align}
\label{def:Kk}
 \ma{K}_k = \Fou_k\Fou_k^\adjoint = \Fou\; h_{\lambda_{k}}(\ma{\Lambda})\; \Fou^\adjoint,
\end{align}
with $h_{\lambda_{k}}$ s.t. $h_{\lambda_{k}}(\lambda)=1$ if $\lambda\leq \lambda_k$ and $0$ otherwise. In GSP words, $\ma{K}_k$ is the ideal low-pass with cutting frequency $\lambda_k$. 

\begin{proposition} 
\label{thm:Kk} A sample from the DPP with marginal kernel $\ma{K}_k$ defined as in~(\ref{def:Kk}) is of size $k$ and $\forall\sig\in\spann{(\Fou_k)}$, the measurement $\meas=\ma{M}\sig + \vec{n} \in\mathbb{R}^k$ enables perfect reconstruction up to the noise level.
\end{proposition}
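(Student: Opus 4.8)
The plan is to prove the two assertions separately and, for the reconstruction part, to reduce to Proposition~\ref{thm:sigma_0}. For the cardinality, note that $\ma{K}_k=\Fou_k\Fou_k^\adjoint$ is the orthogonal projector onto $\spann(\Fou_k)$, so its spectrum consists of the eigenvalue $1$ with multiplicity $k$ and the eigenvalue $0$ with multiplicity $N-k$. Running Alg.~\ref{alg:sampling} on this spectrum, the ``for loop'' retains exactly the $k$ eigenvectors whose eigenvalue is $1$, so $|\mathcal{J}|=k$ deterministically and hence $|\mathcal{A}|=k$; equivalently, by~\eqref{eq:mean_and_var}, $\mathbb{E}(|\mathcal{A}|)=\text{tr}(\ma{K}_k)=k$ while $\text{Var}(|\mathcal{A}|)=\sum_i\mu_i(1-\mu_i)=0$.

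Since $m=|\mathcal{A}|=k$, the matrix $\Meas\Fou_k$ is $k\times k$, and by Proposition~\ref{thm:sigma_0} it suffices to show that, with probability one, its smallest singular value $\sigma_1$ is positive. The key point is that a DPP whose marginal kernel is a rank-$k$ orthogonal projector is a \emph{projection} DPP: all of its realizations have exactly $k$ elements, so for any $k$-subset $\mathcal{S}$ the events $\{\mathcal{S}\subseteq\mathcal{A}\}$ and $\{\mathcal{S}=\mathcal{A}\}$ coincide, and Definition~\ref{def:DPP} gives $\mathbb{P}(\mathcal{A}=\mathcal{S})=\det((\ma{K}_k)_{\mathcal{S}})$. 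For the measurement matrix $\Meas$ of a size-$k$ set one has $(\ma{K}_k)_{\mathcal{S}}=\Meas\Fou_k\Fou_k^\adjoint\Meas^\adjoint=(\Meas\Fou_k)(\Meas\Fou_k)^\adjoint$, whose determinant equals $\prod_{i=1}^k\sigma_i^2$. As the realized sampling set $\mathcal{A}$ was actually drawn, $\mathbb{P}(\mathcal{A})>0$, hence $\prod_{i=1}^k\sigma_i^2>0$ and in particular $\sigma_1>0$. Proposition~\ref{thm:sigma_0} then yields $\sig_{\text{rec}}=\sig+\Fou_k(\Fou_k^\adjoint\Meas^\adjoint\Meas\Fou_k)^{-1}\Fou_k^\adjoint\Meas^\adjoint\vec{n}$, i.e., perfect reconstruction up to the noise term.

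The linear-algebra identities above are routine, so I expect the only point needing care to be the justification of the two projection-DPP facts (that realizations have fixed cardinality $k$, and that $\mathbb{P}(\mathcal{A}=\mathcal{S})=\det((\ma{K}_k)_{\mathcal{S}})$). These can be cited from~\cite{kulesza_determinantal_2012}, or made self-contained by tracking the subspace spanned by $V$ through the ``while loop'' of Alg.~\ref{alg:sampling}: one checks by induction that after $j$ selections this subspace has dimension $k-j$, is contained in $\spann(\Fou_k)$, and is orthogonal to the canonical vectors already selected; a new index can then be picked only if its projection onto that subspace is nonzero, which forces the projections onto $\spann(\Fou_k)$ of the $k$ selected canonical vectors to be linearly independent, equivalently $\sigma_1>0$.
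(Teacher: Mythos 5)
Your proposal is correct and follows essentially the same route as the paper: fixed cardinality $k$ from the mean/variance formula~\eqref{eq:mean_and_var} applied to the projector's $0$--$1$ spectrum, then positivity of $\det\bigl((\ma{K}_k)_{\mathcal{A}}\bigr)=\prod_{i=1}^k\sigma_i^2$ for any realized sample, and finally Proposition~\ref{thm:sigma_0}. The only cosmetic difference is that you invoke the projection-DPP identity $\mathbb{P}(\mathcal{A}=\mathcal{S})=\det\bigl((\ma{K}_k)_{\mathcal{S}}\bigr)$, whereas the weaker inclusion $\{\mathcal{A}=\mathcal{S}\}\subseteq\{\mathcal{S}\subseteq\mathcal{A}\}$ together with Definition~\ref{def:DPP} already gives $\det\bigl((\ma{K}_k)_{\mathcal{S}}\bigr)>0$, which is all the paper uses.
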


\begin{proof}
$\ma{K}_k = \sum_{i=1}^N\mu_i\vec{u}_i\vec{u}_i^\adjoint$, with $\mu_1=\ldots=\mu_k=1$ and $\mu_{k+1}=\ldots=\mu_N=0$. According to~\eqref{eq:mean_and_var}, 
$\mathbb{E}(|\mathcal{A}|)=k$ and $\text{Var}(|\mathcal{A}|)=0$, implying that any sample $\mathcal{A}$ has size $k$. Also, by definition of a DPP, the fact that $\mathcal{A}$ is sampled implies:
 \begin{align}
 \label{eq:detsup0}
  \text{det}({\ma{K}_{k}}_{\mathcal{A}})>0,~ \ie,~ \text{det}(\Meas\Fou_k\Fou_k^\adjoint\Meas^\adjoint)=\prod_{i=1}^k \sigma_i^2>0,
 \end{align}
implying $\sigma_1>0$. Using Prop.~\ref{thm:sigma_0} completes the proof. 
\end{proof}
 The most probable sample from this DPP is the sample $\mathcal{A}$ of size $k$ that maximizes $\text{det}(\Meas\Fou_k\Fou_k^\adjoint\Meas^\adjoint)$ \ie, $\mathcal{A}^{\text{MV}}$, the solution to the maximal volume optimisation problem~(\ref{eq:optim_MV}). 

\subsection{If $\Fou_k$ is unknown: Wilson's marginal kernel}
\label{subsec:wilson}

\begin{algorithm}[tb]
   \caption{Wilson's algorithm (see Sec.~\ref{subsec:wilson})}
   \label{alg:wilson}
\begin{algorithmic}
\STATE \textbf{Input:} Adjacency matrix $\ma{W}$ of the graph and $q>0$\\
$\mathcal{Y}\leftarrow \emptyset$, $\mathcal{V}\leftarrow \emptyset$\\
\textbf{while} $\mathcal{V}\neq\{1,\ldots,N\}$ \textbf{do}:\\
\hspace{0.5cm} $\bm{\cdot}$ Start a random walk from any node $i\in\{1,\ldots,N\}\setminus\mathcal{V}$\\\hspace{1cm} until it reaches either $\Delta$, or a node in $\mathcal{V}$.\\
\hspace{0.5cm} $\bm{\cdot}$ Erase all the loops of the trajectory and denote by $\mathcal{S}$\\ \hspace{1cm} the set of all new visited nodes (after erasure). \\
\hspace{0.5cm} $\bm{\cdot}$ $\mathcal{V}\leftarrow \mathcal{V}\cup\mathcal{S}$\\
\hspace{0.5cm} \textbf{if} the last node of the trajectory is $\Delta$ \textbf{do}: \\
\hspace{1cm} $\bm{\cdot}$ Denote by $l$ the last visited node before $\Delta$\\
\hspace{1cm} $\bm{\cdot}$ $\mathcal{Y}\leftarrow \mathcal{Y}\cup\{l\}$ \\
\textbf{Output:} $\mathcal{Y}$.
\end{algorithmic}
\end{algorithm}

First, add a node $\Delta$ to the graph, that we will call ``absorbing state'' and that we connect to all existing $N$ nodes with a weight $q$. We will propagate random walks on this extended graph. Note that the probability to jump from any node $i\neq\Delta$:
\begin{itemize}
 \item to any other node $j\neq\Delta$ is equal to $\ma{W}_{ij}/(\ma{D}_{ii}+q)$;
 \item to $\Delta$ is $q/(\ma{D}_{ii}+q)$.
\end{itemize}
Once the random walk reaches $\Delta$, it cannot escape from it. 
Wilson and Propp~\cite{wilson_generating_1996,propp1998get} introduced Algorithm~\ref{alg:wilson} in the context of spanning tree sampling. This Algorithm outputs a set $\mathcal{Y}$ of nodes by propagating loop-erased random walks.

\begin{proposition}[Corollary of Thm 3.4 in~\cite{avena_random_2013}] 
\label{thm:corollary}$\mathcal{Y}$ is a sample from a DPP with marginal kernel 
$\ma{K}_q = \Fou g_q(\Lambda) \Fou^\adjoint$, 
with $g_q(\lambda)=\frac{q}{q+\lambda}$.
\end{proposition}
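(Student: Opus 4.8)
The plan is to reduce the statement to two well-known facts and a one-line functional-calculus check. The first fact is the cited Theorem~3.4 of~\cite{avena_random_2013}: the set of \emph{roots} $\rho(\Phi_q)$ of the random rooted spanning forest $\Phi_q$ of $\Graph$ (the forest whose law is $\Pbb(\phi)\propto q^{|\rho(\phi)|}\prod_{e\in\phi}\ma{W}_e$, normalized by $\det(q\ma{I}+\Lap)$ thanks to the matrix--forest theorem) is a DPP with marginal kernel $q(q\ma{I}+\Lap)^{-1}$. The second fact is Wilson's theorem~\cite{wilson_generating_1996,propp1998get}: the loop-erased random walk construction of Algorithm~\ref{alg:wilson} samples a weighted uniform spanning tree of the appropriate graph. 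So it suffices to (i) identify the output $\mathcal{Y}$ of Algorithm~\ref{alg:wilson} with $\rho(\Phi_q)$ in distribution, and (ii) observe that $q(q\ma{I}+\Lap)^{-1}=\Fou\,q(q\ma{I}+\Eig)^{-1}\Fou^\adjoint=\Fou\,g_q(\Eig)\,\Fou^\adjoint$ with $g_q(\lambda)=q/(q+\lambda)$, which is immediate from $\Lap=\Fou\Eig\Fou^\adjoint$.

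For step (i), I would first note that the Markov chain described just before Algorithm~\ref{alg:wilson} is exactly the weighted random walk on the augmented graph $\Graph^\Delta$ obtained by adding the absorbing vertex $\Delta$ and edges $\{i,\Delta\}$ of weight $q$: from $i\neq\Delta$ it steps to $j$ with probability $\ma{W}_{ij}/(\ma{D}_{ii}+q)$ and to $\Delta$ with probability $q/(\ma{D}_{ii}+q)$. Running Wilson's algorithm on $\Graph^\Delta$ with $\Delta$ as the unique root (note $\mathcal V$ starts empty and $\Delta$ is absorbing, so the first walk is killed at $\Delta$) returns a random spanning tree $T$ of $\Graph^\Delta$ with $\Pbb(T)\propto\prod_{e\in T}w(e)$. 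Then I would invoke the standard bijection between spanning trees of $\Graph^\Delta$ rooted at $\Delta$ and rooted spanning forests of $\Graph$: deleting from $T$ the edges incident to $\Delta$ leaves a forest $\phi$ of $\Graph$, each component a tree whose root is the $\Graph$-endpoint of a deleted $\Delta$-edge; these endpoints are precisely the nodes $l$ accumulated into $\mathcal{Y}$ in the algorithm. This bijection multiplies the weight by exactly one factor $q$ per root, so $\prod_{e\in T}w(e)=q^{|\rho(\phi)|}\prod_{e\in\phi}\ma{W}_e$ and $\phi\stackrel{d}{=}\Phi_q$; hence $\mathcal{Y}\stackrel{d}{=}\rho(\Phi_q)$.

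Combining: $\mathcal{Y}\stackrel{d}{=}\rho(\Phi_q)$ is a DPP with marginal kernel $q(q\ma{I}+\Lap)^{-1}=\Fou\,g_q(\Eig)\,\Fou^\adjoint=\ma{K}_q$, which is the claim.

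The main obstacle is step (i), namely pinning down the exact weighted law produced by the loop-erasure procedure on $\Graph^\Delta$ and checking that the ``delete the $\Delta$-edges'' map is weight-preserving up to the $q^{|\rho|}$ factor, so that it transports the uniform-spanning-tree measure on $\Graph^\Delta$ to the forest measure $\Phi_q$ on $\Graph$ and matches the root set with $\mathcal{Y}$. Everything else --- the spectral rewriting of the kernel and reading off $\mathcal{Y}$ as the set of last-nodes-before-$\Delta$ --- is routine, and depending on how~\cite{avena_random_2013} phrases Theorem~3.4 (e.g.\ in terms of the Green's function $(q\ma{I}+\Lap)^{-1}$ or a generator convention), only a minor notational translation is needed.
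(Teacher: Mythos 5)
Your proposal is correct and follows essentially the same route as the paper: the paper simply cites Thm~3.4 and Lemma~3.3 of~\cite{avena_random_2013} to identify the output of Alg.~\ref{alg:wilson} with a DPP of kernel $q(\Lap+q\ma{I})^{-1}$ and then diagonalizes $\Lap=\Fou\Eig\Fou^\adjoint$, exactly as in your step~(ii). The only difference is that you additionally sketch the identification of $\mathcal{Y}$ with the roots of the random spanning forest (Wilson's theorem plus the tree/forest bijection on the augmented graph), a step the paper delegates entirely to the cited reference; your sketch of it is sound.
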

\begin{proof}
Given that the weight $q$ we added is finite and constant over all nodes, Thm 3.4 and Lemma 3.3 in~\cite{avena_random_2013} state that $\mathcal{Y}$ is distributed as a DPP with marginal kernel equal to $(\Lap+q\ma{I})^{-1}q\ma{I}$ where $\ma{I}$ is the identity matrix (note that $L$ in~\cite{avena_random_2013} refers to $-\Lap$ in our paper). Decomposing $\Lap$ as $\Fou\Lambda\Fou^\adjoint$ finishes the proof. 
\end{proof}
Prop.~\ref{thm:Kk} shows that sampling with $\ma{K}_k$ is optimal in the sense it enables perfect reconstruction with only $m=k$ measurements. Unfortunately, as $N$ or $k$ increase, computing $\ma{K}_k$ becomes prohibitive. Considering $g_q$ as an approximation of $h_{\lambda_k}$, one may see sampling with $\ma{K}_q$ as an approximation of  optimal sampling with $\ma{K}_k$. Prop.~\ref{thm:corollary} has a beautiful consequence: one does not need to explicitly compute $\ma{K}_q$ (which would also be too expensive) to sample from it. In fact, Alg.~\ref{alg:wilson} efficiently samples from $\ma{K}_q$. Thus, Alg.~\ref{alg:wilson} approximates sampling from $\ma{K}_k$ without any spectral computation of $\Lap$. 

\section{Experiments}
\label{sec:experiments}
 
\noindent\textbf{The Stochastic Block Model (SBM).} 
We consider random community-structured graphs drawn from the SBM. We specifically look at graphs 
with $k$ communities of same size $N/k$. In the SBM, the probability of connection between any two nodes $i$ and $j$ is $q_1$ if they are in the same community, and $q_2$ otherwise. One can show that the average degree reads  $c=q_1\left(\frac{N}{k}-1\right)+q_2\left(N-\frac{N}{k}\right)$. 
Thus, instead of providing the probabilities $(q_1,q_2)$, one may characterize a SBM by considering $(\epsilon=\frac{q_2}{q_1},c)$. 
The larger $\epsilon$, the fuzzier the community structure. In fact, 
authors in~\cite{decelle_asymptotic_2011} show that above the critical value $\epsilon_c=(c-\sqrt{c})/(c+\sqrt{c}(k-1))$, community structure becomes  undetectable in the large $N$ limit. In the following, $N=100$, $k=2$, $c=16$ and we let $\epsilon$ vary. 

\noindent\textbf{To generate a $k$-bandlimited signal}, we draw $k$ realisations of a normal distribution with mean $0$ and variance $1$, to build the vector $\vec{\alpha}\in\mathbb{R}^k$. Then, renormalize to make sure that $\norm{\vec{\alpha}}^2=1$, and write $\sig=\Fou_k\vec{\alpha}$. In our experiments, the measurement noise is Gaussian with mean 0 and $\sigma_n=10^{-4}$.

\begin{figure*}
 \centering
a)\hspace{-0.4cm}\includegraphics[width=0.7\columnwidth]{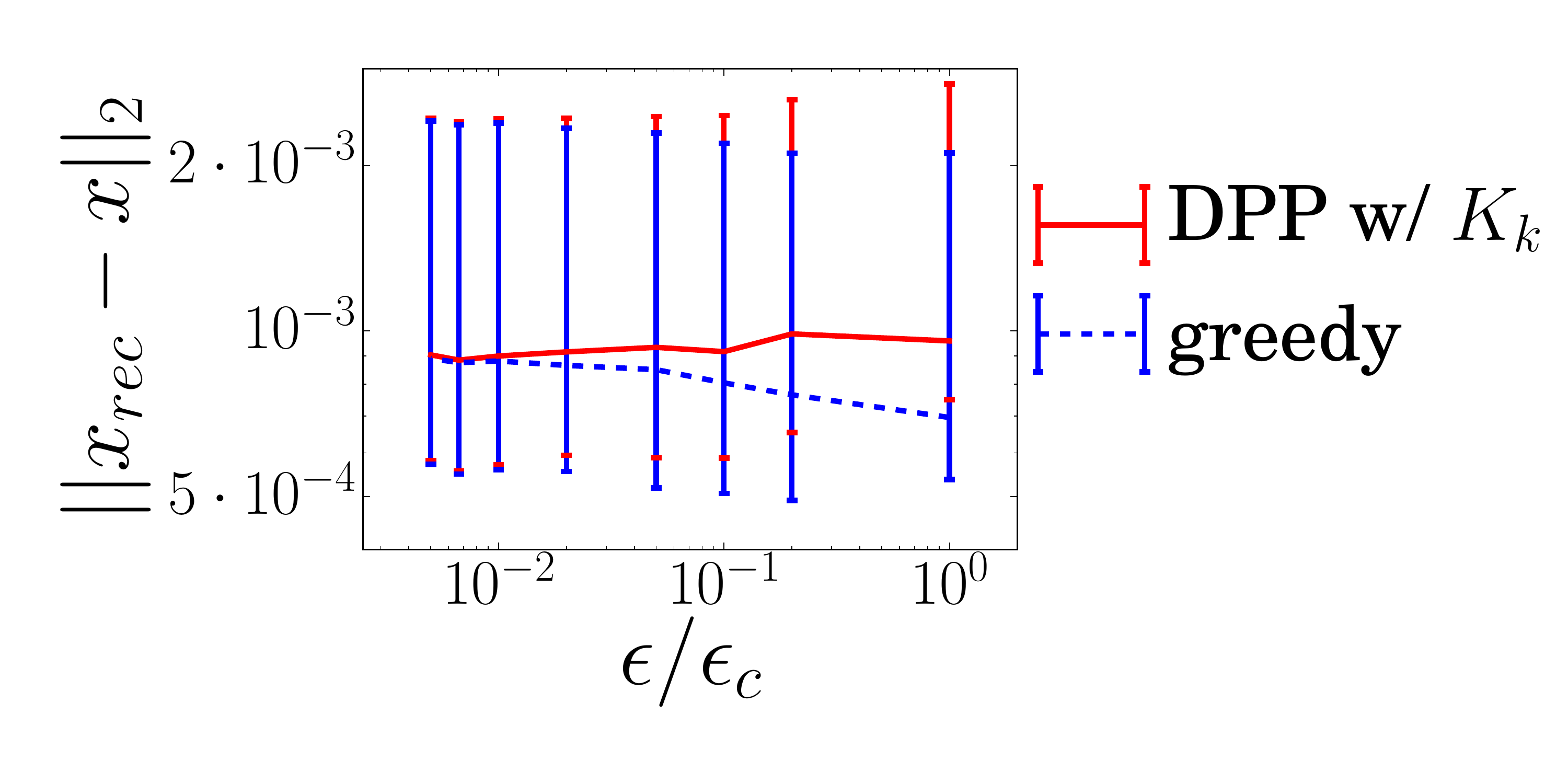} 
b)\hspace{-0.4cm}\includegraphics[width=0.7\columnwidth]{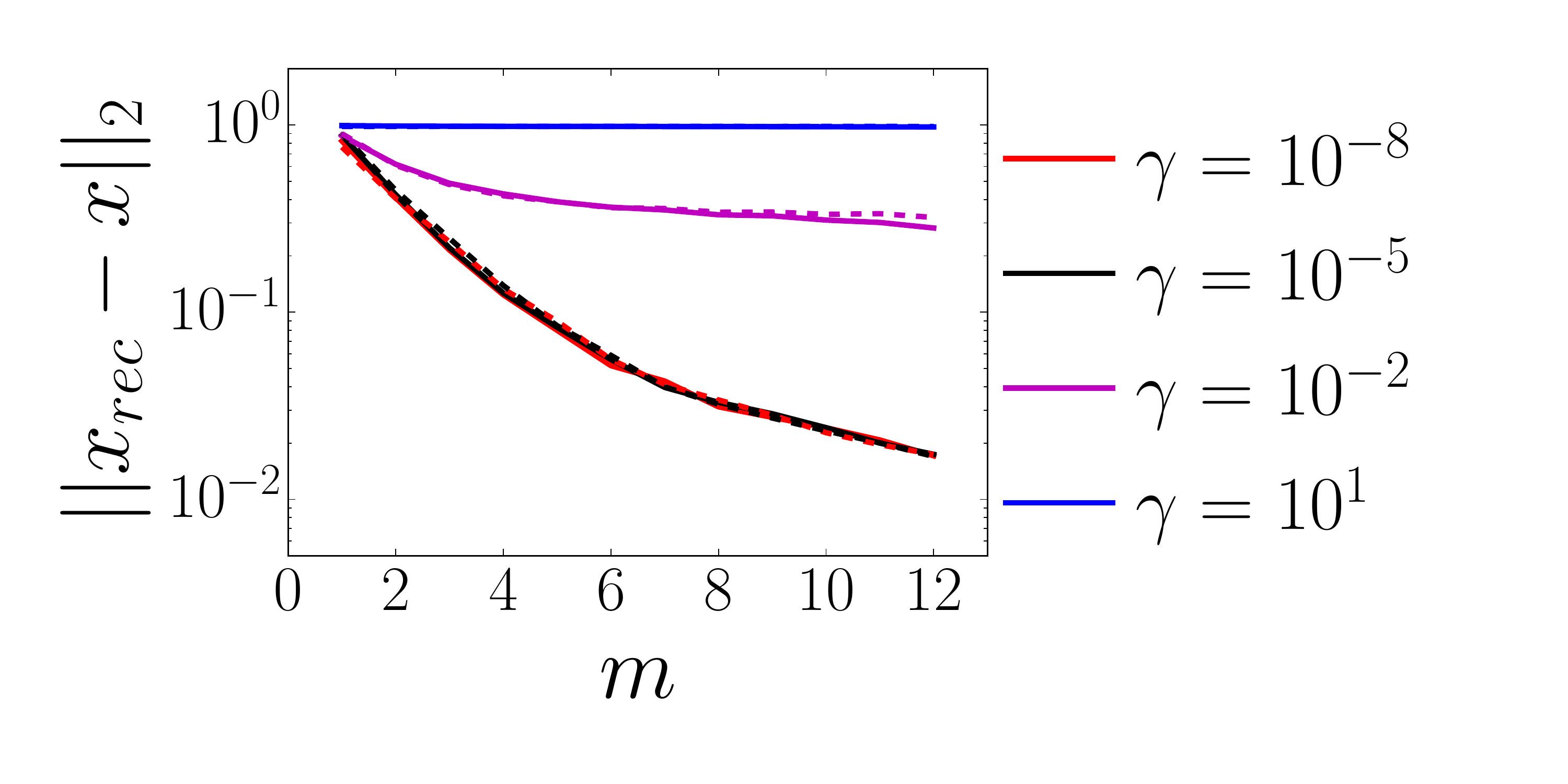}  \hspace{-0.5cm}
c)\hspace{-0.4cm}\includegraphics[width=0.7\columnwidth]{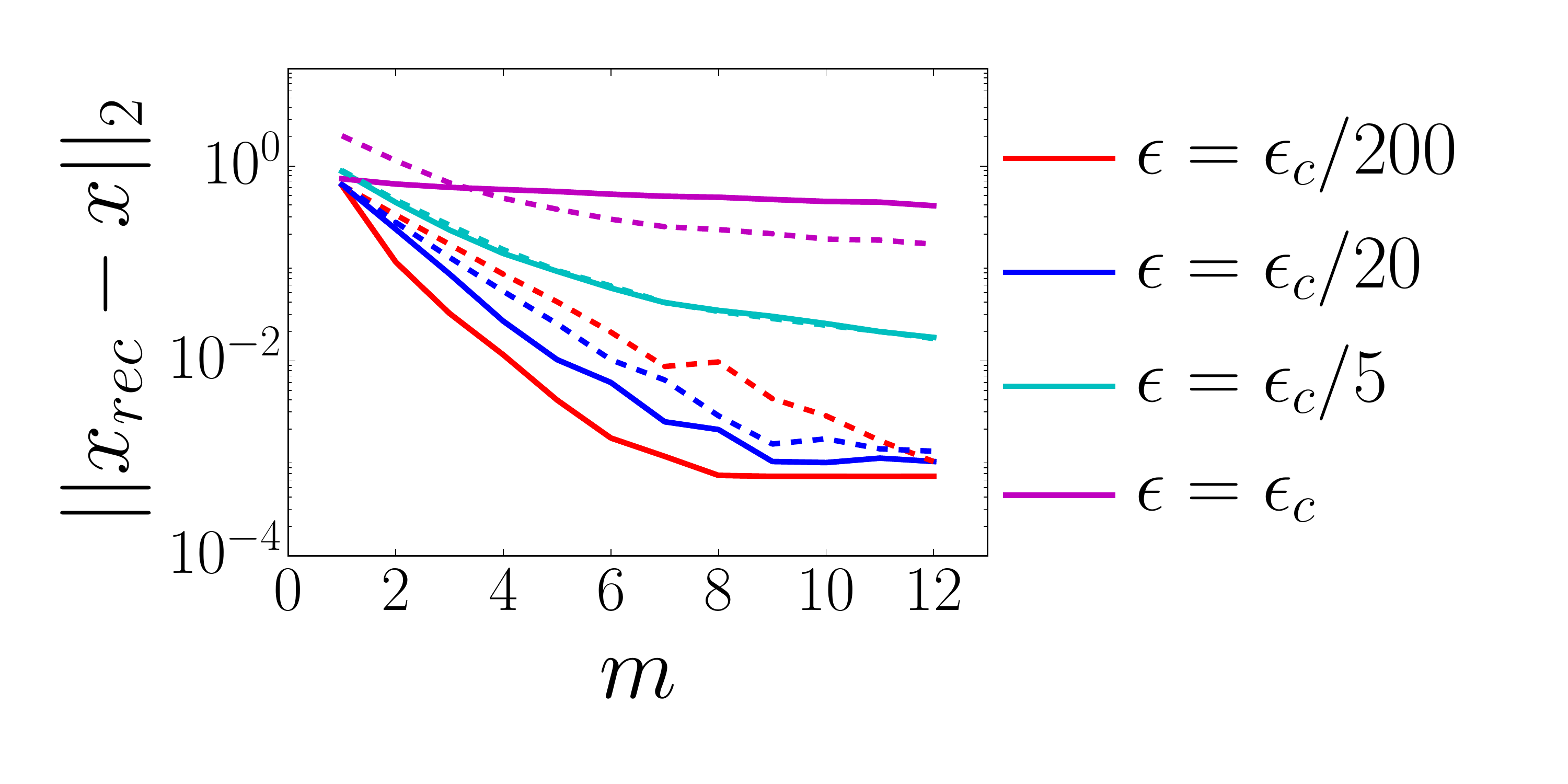} 
\caption{a) Comparison of the recovery performance after noisy measurements of $k$-bandlimited signals between DPP sampling with $\ma{K}_k$, and the 4 sampling methods presented in Sec.~\ref{subsec:greedy_options} (grouped here as ``greedy'': they perform equivalently). The intervals represent the 10- and 90-percentiles; the lines the averages, computed over $10^4$ different signals defined on $100$ different SBM realisations for each value of $\epsilon$. b)~Reconstruction performance versus the number of measurements $m$ for different values of the regularization parameter $\gamma$, for $\epsilon=\epsilon_c/5$. c)~Performance versus $m$ for different values of $\epsilon$. In b) and c), the full (resp. dashed) lines represent results obtained after DPP sampling via Wilson's algorithm (resp. after uncorrelated sampling such as in~\cite{puy_random_2016}). For each value of $m$, we average over $3500$ bandlimited signals defined on $100$ different SBM realisations. }
\label{fig:experiments}
\end{figure*}

\begin{algorithm}[tb]
   \caption{Estimation of $\pi_i=\sum_{j=1}^N g_q(\lambda_j)\vec{u}_j(i)^2$}
   \label{alg:prob_est}
\begin{algorithmic}
\STATE \textbf{Input:} Laplacian $\Lap$, $g_q:\mathbb{R}\rightarrow\mathbb{R}^+$, parameters $d$ and $n$\\
$\bm{\cdot}$ Compute $\lambda_{N}$ the largest eigenvalue of $\Lap$\\
$\bm{\cdot}$ Approximate $s=\sqrt{g_q}$ on $[0,\lambda_N]$  with a polynomial of order $d$  \ie, 
$\ma{S}=\Fou s(\Lambda)\Fou^\adjoint\simeq\Fou\sum_{l=1}^d \beta_l\Lambda^l\Fou^\adjoint=\sum_{l=1}^d \beta_l\Lap^l$\\
$\bm{\cdot}$ Generate $\ma{R}\in\mathbb{R}^{N\times n}$ with $\forall (i,j)~~\ma{R}_{ij}=\mathcal{N}(0,1/n)$\\
$\bm{\cdot}$ Compute $\ma{SR}\simeq \sum_{l=1}^d \beta_l\Lap^l\ma{R}$ recursively
\\
\textbf{Output:} $\forall i~~\hat{\pi}_i=\norm{\vec{\delta}_i^\adjoint\ma{SR}}^2$
\end{algorithmic}
\end{algorithm}

\noindent\textbf{First series: $\Fou_k$ known.} We compare DPP sampling from $\ma{K}_k$ followed by recovery using~(\ref{eq:recovery_Uk_known_random}); to deterministic sampling with  $\hat{\mathcal{A}}^{\text{WCE}}$, $\hat{\mathcal{A}}^{\text{MSE}}$, $\hat{\mathcal{A}}^{\text{MV}}$ and $\hat{\mathcal{A}}^{\text{maxvol}}$ presented in Sec.~\ref{subsec:greedy_options} followed by recovery using~(\ref{eq:recovery_Uk_known}). 
We show in Fig.~\ref{fig:experiments}a) the recovery performance with respect to $\epsilon/\epsilon_c$: all greedy methods perform similarly, and on average slightly outperform the DPP-based method, especially as the graphs become less structured. 

\noindent\textbf{Second series: $\Fou_k$ unknown.}  We compare negatively correlated sampling from $\ma{K}_q$ (using Wilson's algorithm) followed by recovery using~(\ref{eq:recovery_Uk_unknown_random}) with $\ma{P}$ as in~\eqref{eq:P_DPP}; to the uncorrelated random sampling~\cite{puy_random_2016} where nodes are sampled independently with replacement from $p^*$ with  $p^*_i=\norm{\Fou_k^\adjoint\vec{\delta}_i}^2/k$, followed  by recovery using~(\ref{eq:recovery_Uk_unknown_random}) with $\ma{P}$ as in~\eqref{eq:P_uncorr}. Several  comments are in order. i)~the number of samples from Wilson's algorithm is not known in advance, but we know its expected value $\mathbb{E}(|\mathcal{A}|)=\sum_i q/(q+\lambda_i)$. In order to explore the behavior around the critical number of measurements $m=k$, one needs to choose $q$ s.t. $\sum_i q/(q+\lambda_i)\simeq k$, but without computing the $\lambda_i$! To do so, we follow the proposition of~\cite{avena_random_2013} (see discussion around Eq.~(4.14)) based on a few runs of Alg.~\ref{alg:wilson}. Once a sample of size approximately equal to $k$ is exhibited, and for fair comparison, one uses the same number of nodes for the uncorrelated sampling. ii)~exact computation of $p^*$ for the uncorrelated sampling requires $\Fou_k$. We follow the efficient Alg.~1 of~\cite{puy_random_2016} to approximate $p^*$. iii)~recovery using $\ma{P}$ as in~\eqref{eq:P_DPP} requires to know $\pi_i=\sum_{j=1}^N g_q(\lambda_j)\vec{u}_j(i)^2$ for each node $i$. To estimate $\pi_i$ without spectral decomposition of $\Lap$, an elegant solution is via fast graph  filtering of random signals, as in Alg.~\ref{alg:prob_est}. Building upon the Johnson-Lindenstrauss lemma, one can show  that $\hat{\pi}_i$ concentrates around $\pi_i$ for $n=O(\log{N})$ (see a similar proof in~\cite{tremblay_accelerated_2016}). We fix $n=20\log{N}$ and $d=30$.

The reconstruction's parameters are $\gamma$ and $r$. Following~\cite{puy_random_2016}, $r$ is fixed to $4$. Fig.~\ref{fig:experiments}b) shows that the reconstruction performance saturates at $\gamma=10^{-5}$. With these values of $r$ and $\gamma$, we compare DPP sampling with $\ma{K}_q$ vs uncorrelated sampling in Fig.~\ref{fig:experiments}c) with respect to the number of measurements, for different values of $\epsilon$. The lower $\epsilon$, \ie~the stronger the community structure, the better DPP sampling is compared to uncorrelated sampling. 
In terms of computation time, a precise comparison of both methods versus the different parameters is out of this paper's scope.  Nevertheless, to give an idea of the method's scalability, for a SBM with $N=10^5$ (resp. $10^6$), $k=2$, $c=16$ and $\epsilon=\epsilon_c/5$, and with $q=5\cdot10^{-4}$, Wilson's algorithm outputs in average 5 (resp. 36) samples in a mean time of 7 (resp. 90) seconds, using Python on a laptop.

\section{Conclusion}
We first show that sampling from a DPP with marginal kernel $\ma{K}_k=\Fou_k\Fou_k^\adjoint$ \ie~the projector onto the first $k$ graph Fourier modes, outputs a subset of size $k$ that enables perfect reconstruction. The robustness to noise of the reconstruction is comparable to state-of-the-art greedy sampling algorithms. Moreover, in the important and fairly common case where the first $k$ eigenvectors of the Laplacian are not computable, we show that Wilson's algorithm (Alg.~\ref{alg:wilson}) may be leveraged as an efficient graph sampling scheme that approximates sampling from $\ma{K}_k$. Preliminary experiments on the SBM suggest that in the cases where  the number of samples $m$ needs to stay close to $k$ and where the community structure of the graph is strong, sampling with Wilson's algorithm enables a better reconstruction than the state-of-the-art uncorrelated random sampling. Further investigation, both theoretical and experimental, is necessary to better specify  which families of graphs are favorable to DPP sampling, and which are not.

\bibliographystyle{IEEEtran}
\bibliography{EUSIPCO_Library.bib}

\end{document}